\newtheorem{theorem}{Theorem}
\newtheorem{corollary}[theorem]{Corollary}
\title{\LARGE \bf
	Relevant Region Exploration On General Cost-maps For Sampling-Based Motion Planning
}
\author{ Sagar Suhas Joshi$^{1}$ and Panagiotis Tsiotras$^{2}$ 
\thanks{$^{1}$Robotics PhD student at Georgia Institute of Technology, USA. sagarsjoshi94@gmail.com}
\thanks{$^{2}$Professor and David and Andrew Lewis Chair in the Guggenheim School of Aerospace Engineering at Georgia Institute of Technology, USA. }
\thanks{This work has been supported by NSF award IIS-1617630.}
\thanks{This work was presented at IEEE/RSJ International Conference on Intelligent Robots and Systems (IROS), Las Vegas, NV, 2020.}
\thanks{DOI: 10.1109/IROS45743.2020.9340806}
}
\begin{document}
	
	\maketitle
	\thispagestyle{empty}
	\pagestyle{empty}
	
	\begin{abstract}
		Asymptotically optimal sampling-based planners require an intelligent exploration strategy to accelerate convergence. 
		After an initial solution is found, a necessary condition for improvement is to generate new samples in the so-called "Informed Set". 
		However, Informed Sampling can be ineffective in focusing search if the chosen heuristic fails to provide a good estimate of the solution cost. 
		This work proposes an algorithm to sample the "Relevant Region" instead, which is a subset of the Informed Set. 
		The Relevant Region utilizes cost-to-come information from the planner's tree structure, reduces dependence on the heuristic, and further focuses the search.
		Benchmarking tests in uniform and general cost-space settings demonstrate the efficacy of Relevant Region sampling.    
	\end{abstract}
	\section{INTRODUCTION}
	In recent years, sampling-based motion planning (SBMP) algorithms have gained popularity due to their ability to handle high dimensional search spaces.
	Deterministic search methods such as A* do not scale well owing to the computational cost associated with the a priori discretization of the search space.
	Incremental SBMP algorithms such as RRT \cite{lavalle2001randomized}, on the other hand, avoid this computational overhead by generating random samples to build a connectivity tree online.
	However, the RRT algorithm only guarantees probabilistic completeness. 
	Complementing the exploration module of RRTs with an exploitation module results in asymptotic optimality for these randomized methods i.e., 
	planner converges to the optimal solution almost-surely as the number of samples tend to infinity.
	While the exploration module generates new samples and extends the connectivity tree, the exploitation module processes this tree to improve the current solution.
	The popular RRT* algorithm \cite{karaman2011sampling} locally rewires the tree for exploitation, while RRT$^\#$ \cite{arslan2013use} utilizes dynamic programming to implement a "global rewiring" procedure. RRT$^\#$ ensures optimal connection for each vertex in the current graph at the end of every iteration. 
	Other algorithms such BIT* \cite{gammell2015batch} and FMT* \cite{janson2015fast} also utilize heuristics and dynamic programming to conduct an efficient search and ensure faster convergence compared to RRT*. 
	The DRRT algorithm \cite{hauer2017deformable} uses gradient descent to optimize the location of samples in the tree structure. 
	
	Conventionally, SBMP algorithms have employed a uniform random exploration strategy. This results in an implicit Voronoi bias, leading to a rapid exploration of the search space.
	However, in order to achieve a more focused search, several improvements to the uniform sampling strategy have been suggested.
	These include local biasing \cite{akgun2011sampling}, use of a heuristic-based quality measure \cite{urmson2003approaches}, application of an information-theoretic framework \cite{burns2005toward}, \cite{burns2005single} and translating ideas from A* to the continuous domain \cite{diankov2007randomized}. OBRRT \cite{rodriguez2006obstacle} uses obstacle information to guide the tree growth. MARRT \cite{denny2014marrt} retracts samples onto the medial axis of the free space to obtain high clearance paths. 
	\begin{figure}
		\centering
		\includegraphics[width=.65\columnwidth]{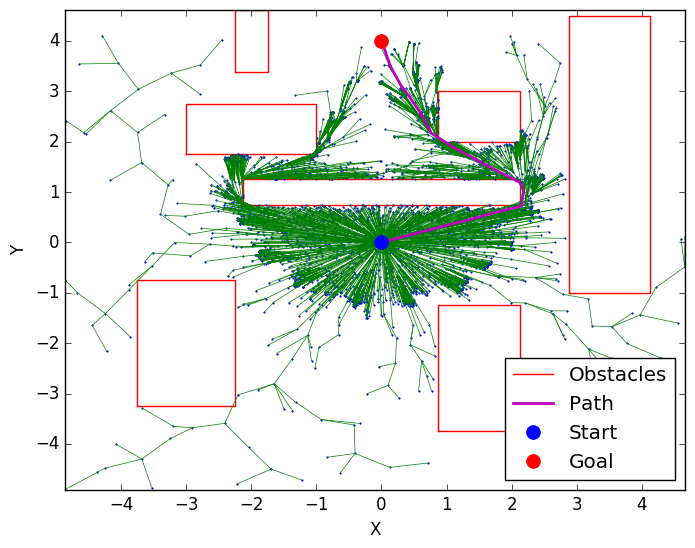}
		\includegraphics[width=.65\columnwidth]{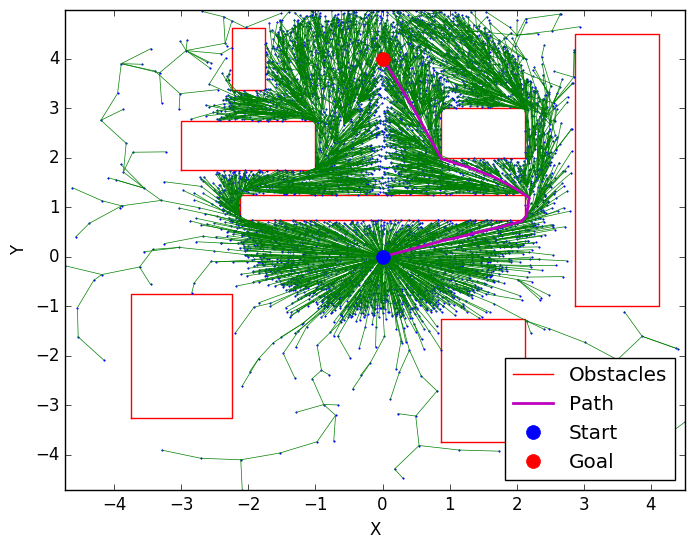}
		\caption{Planning in a multiple obstacle environment with Relevant Region sampling (top) and Informed Sampling (bottom). Note that the Relevant Region focuses on two pertinent homotopy classes whereas the Informed Sampling generates uniform samples inside the ellipsoidal region.}
		\label{fig:mulObstacleEnvironment}
	\end{figure}

	After an initial (sub-optimal) solution is found, exploration can be focused on a subset of the search-space that can potentially further improve the current solution. Doing so can lead to a dramatically faster convergence \cite{gammell2018informed}.
	The Relevant Region, introduced in \cite{arslan2015dynamic}, leverages the current solution cost and the planner's tree structure information to focus search. 
	A selective vertex inclusion procedure \cite{arslan2015dynamic} and a machine learning approach \cite{arslan2015machine} has been proposed to generate new samples in the Relevant Region. 
	However, these approaches fall into the category of rejection sampling methods, which do not scale well for high dimensional problems. 
	The current work rigorously defines the Relevant Region set, analyzes its theoretical properties and presents a \textit{generative} method to sample it.
	This work also extends the Relevant Region framework for planning on general cost-maps. 
	
	Gammell et al. \cite{gammell2018informed} use an admissible heuristic and the current solution cost to define the Informed Set.
	This set includes all points that can potentially improve the current solution.
	Exploration outside the Informed Set is thus redundant. 
	For minimum-length planning problems in Euclidean spaces, the authors of \cite{gammell2018informed} proposed an efficient method to generate samples in the $L_2$-Informed Set.
	The Lebesgue measure of the $L_2$-Informed set decreases as the solution improves, leading to a focused search.
	However, Informed Sampling effectively resorts to uniform random sampling if the Lebesgue measure of the Informed Set is comparable to that of the entire search space. 
	This can happen if the heuristic estimate of the solution cost fails to provide a good enough approximation of the true solution cost.  
	The proposed approach addresses these issues by utilizing cost-to-come information from the planner's tree structure and reducing dependence on heuristics. 
	
	The Expansive space trees (EST) algorithm~\cite{hsu1997path} proceeds by selecting a vertex (with probability inversely proportional to number of vertices in its neighborhood) and generates a new sample in its vicinity.
	Guided ESTs \cite{phillips2004guided} add the A* cost and an exploration term to the vertex weights.
	The SBA* algorithm \cite{persson2014sampling} incorporates a graph density and a constriction measure into the vertex weight. 
	While the algorithm proposed in this work falls into the category of EST-like methods, a crucial difference is that it only expands vertices and generates new samples in the Relevant Region.
	Thus, in contrast to EST and its variants, the proposed algorithm avoids needless exploration.
	
	The SBMP algorithms and the exploration strategies mentioned above are traditionally geared towards finding the (length) optimal path in uniform cost spaces. However, many applications require planning algorithms to find the optimal path with respect to a provided cost function.
	These include the problem of navigation on a rough terrain for a mobile robot (see Fig.\ref{fig:grand_canyon}), safety critical path planning with clearance cost-map (example in Fig.\ref{fig:potential_costmap}), human aware motion planning \cite{mainprice2011planning}, and planning on energy landscapes \cite{jaillet2011randomized}.
	The Transition-based RRT (T-RRT) algorithm~\cite{jaillet2010sampling} takes a user-defined cost function as an additional input and adds a transition test based on the Metropolis criterion to accept or reject potential new states. The transition test favors exploration of low-cost regions of space and leads to better quality paths.
	An enhanced, bi-directional version of T-RRT is presented in~\cite{devaurs2013enhancing}.
	Berenson et al~\cite{berenson2011addressing} combine gradient information within the T-RRT framework to address the issue of navigating cost-space chasms.
	Finally, Devaurs et al~\cite{devaurs2015optimal} combine the filtering properties of the transition test with the local rewiring procedure of RRT* to obtain the asymptotically optimal T-RRT* algorithm.
	While the transition test promotes exploration of low-cost regions, unlike the Informed and Relevant Region sets, it does not focus exploration on to a subset of the search-space based on the current solution.
	Secondly, the probabilistic rejection strategy of the transition test might not scale well to higher dimensional spaces, as the probability of generating a "good" sample that can pass the transition test may decrease rapidly.
	The proposed algorithm addresses these issues by employing a generative sampling approach. It utilizes heuristics, the current solution cost and the cost function information to effectively focus the search in general cost-space settings.
	
	In the following sections, the path planning problem on general cost-maps is formally defined, followed by a comparison between the Informed and the Relevant Region sets. A technique to generate samples in the Relevant Region is then proposed, followed by benchmarking results. 
	\section{PROBLEM DEFINITION}
	\subsection{Path Planning Problem}
	Consider the search space $\mathcal{X}$, which is assumed to be a subset of $\mathbb{R}^d$, where $d$ is a positive integer such that $d \geq 2$. Let $\mathcal{X}_\mathrm{obs}$ denote the obstacle space and  $ \mathcal{X}_\mathrm{free}= \mathrm{cl}(\mathcal{X}\setminus \mathcal{X}_\mathrm{obs}) $ denote the free space.
	Here, $\mathrm{cl}(A)$ represents closure of the set $A \subset \mathbb{R}^d$.  
	Let $\mathcal{M}( A )$ denote the Lebesgue measure of the set $A \subset \mathbb{R}^d$.
	Let $\textbf{x}_\mathrm{s} \in \mathcal{X}_\mathrm{free}$ denote the initial state and $\mathcal{X}_\mathrm{goal} \subset \mathcal{X}_\mathrm{free}$ represent the goal region.
	\begin{figure}[]
		\centering
		\includegraphics[width=0.9\columnwidth,height=.58\columnwidth]{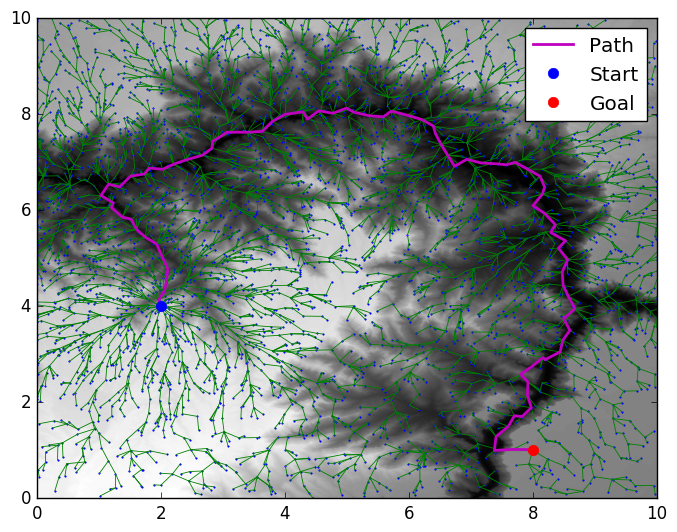}
		\caption{Planning on a terrain cost-map with the proposed sampling strategy. Here, white regions represent rough (high cost) areas and the blacks signify smooth sections. }
		\label{fig:grand_canyon}
	\end{figure}
	Let $C:\mathcal{X} \rightarrow \mathbb{R}_{\geq 0}$ denote a continuous state cost function that imposes a "cost-map" over the search space.
	The path-cost from $\textbf{x}_1 \in \mathcal{X}$ to $\textbf{x}_2 \in \mathcal{X}$ along a path $ \pi:[0,1] \rightarrow \mathcal{X}$ with $\pi(0)=\textbf{x}_1$, $\pi(1)=\textbf{x}_2$ is given by
	\begin{equation}
	\label{eq:ICcost}
	\mathrm{d}_{\pi}(\textbf{x}_1,\textbf{x}_2)= \int_{0}^{1} C(\pi(s)) \  \|\frac{\mathrm{d}\pi(s)}{\mathrm{d}s} \|_2  \ \mathrm{d}s .
	\end{equation}  
	Equation (\ref{eq:ICcost}) represents the integral of cost (IC) along a path as a measure of the path quality. Please see \cite{jaillet2010sampling} for a discussion on different cost criteria.
	If the path is a straight line, i.e., $\pi(s)=\textbf{x}_1+(\textbf{x}_2-\textbf{x}_1)s$, $s \in [0,1]$, then the IC cost is given by
	\begin{equation}
	\label{eq:ICstraightcost}
	\mathrm{d}_\ell(\textbf{x}_1,\textbf{x}_2)= \|\textbf{x}_2-\textbf{x}_1\|_2 \int_{0}^{1} C(\textbf{x}_1+(\textbf{x}_2-\textbf{x}_1)s) \ \mathrm{d}s .
	\end{equation}   
	Note that if $C(\textbf{x})=1$ for all $\textbf{x} \in \mathcal{X}$, then the IC cost in (\ref{eq:ICstraightcost}) reduces to the familiar Euclidean distance $\|\textbf{x}_2-\textbf{x}_1\|_2$.
	Let $\Pi$ denote the set of paths from $\textbf{x}_\mathrm{s}$ to $\mathcal{X}_\mathrm{goal}$.
	Thus, given $(\mathcal{X},\mathcal{X}_\mathrm{obs} ,\textbf{x}_\mathrm{s},\mathcal{X}_\mathrm{goal},C)$, the optimal path planning problem is one of finding the minimum-cost, feasible path $\pi^* \in \Pi$.
	\begin{equation}
	\label{eq:optimalPlanningDef}
	\begin{aligned}
	\arg \min_{\pi \in \Pi}  & \ \mathrm{d}_{\pi}(\textbf{x}_\mathrm{s},\textbf{x}_\mathrm{g}) \\
	\text{subject to:}  & \ \pi(0)= \textbf{x}_\mathrm{s},~ \pi(1)= \textbf{x}_\mathrm{g} \in \mathcal{X}_\mathrm{goal}, \\
	& \ \pi(s) \in \mathcal{X}_\mathrm{free}, ~~~ s \in [0,1].  
	\end{aligned}
	\end{equation}
	Consider the graph $\mathcal{G}=(V,E)$ with a finite set of vertices $V \subset \mathcal{X}_\mathrm{free}$ and edges $E \subseteq V \times V$.
	A spanning tree $\mathcal{T}=(V_s,E_s)$ is embedded in $\mathcal{G}$ such that $V_s=V$ and $E_s=\{ (\textbf{u},\textbf{v}) \in E \ | \ \textbf{v}= \mathsf{parent}(\textbf{u})\}$. 
	Here the function $\mathsf{parent}:V \rightarrow V$ represents the mapping from a vertex to its unique parent vertex.
	By definition, $\mathsf{parent}(\textbf{x}_\mathrm{s})=\textbf{x}_\mathrm{s} $. 
	SBMP algorithms numerically integrate (\ref{eq:ICstraightcost}) to calculate the edge-cost $\mathrm{d}_\ell(\textbf{v},\textbf{u})$ for any edge $(\textbf{u},\textbf{v}) \in E$.
	This work considers cost functions with $C(\textbf{x}) \geq 1$ for all $\textbf{x} \in \mathcal{X}$, so that the edge-cost between any two vertices is at least the Euclidean distance between these two points.
	Given the spanning tree $\mathcal{T}$ in $\mathcal{G}$, the function $\mathrm{g}_\mathcal{T}:V \rightarrow \mathbb{R}_{\geq 0}$ provides the cost-to-come value for any $\textbf{v} \in V$, i.e., it is the sum of the edge-costs along the path from $\textbf{v}$ to $\textbf{x}_\mathrm{s}$ in $\mathcal{T}$.
	A consistent heuristic function on $\mathcal{X}$ (such as the Euclidean distance or $L_2$-norm) is defined as: $\mathrm{h}: \mathcal{X} \times \mathcal{X} \rightarrow \mathbb{R}_{\geq 0} $. The function $\mathrm{h}$ always gives an under-estimate of the path-cost between any two points in the search space, and obeys the triangle inequality.
	The SBMP algorithms solve the planning problem (\ref{eq:optimalPlanningDef}) by drawing random samples from $\mathcal{X}$ and by incorporating the collision-free ones in $\mathcal{G}$.
	An efficient sampling strategy must generate samples so as to find an initial solution or improve the current one.
	The exploration problem in SBMP is one of finding such a sampling strategy to yield faster convergence. 
	%
	%
	\subsection{The Informed Set}
	Let $c_i$ be the cost of the best solution found by the planning algorithm after $i$ iterations. The Informed Set \cite{gammell2018informed} is defined as
	\begin{equation}
	\label{eq:informedSetDef}
	\mathcal{X}_\mathrm{inf} = \{ \textbf{x} \in \mathcal{X} \ | \ \mathrm{h}(\textbf{x}_\mathrm{s},\textbf{x})+\mathrm{h}(\textbf{x},\textbf{x}_\mathrm{g}) < c_i \}.
	\end{equation}    
	Note that $\mathcal{X}_\mathrm{inf}$ uses a heuristic approximation of both the cost-to-come $\mathrm{h}(\textbf{x}_\mathrm{s},\textbf{x})$ and the cost-to-go $\mathrm{h}(\textbf{x},\textbf{x}_\mathrm{g})$ to get an (under)estimate of the \textit{solution cost} constrained to pass through any $\textbf{x} \in \mathcal{X}$.
	Generating new samples in $\mathcal{X}_\mathrm{inf}$ is thus a necessary condition for improving the current solution.
	An algorithm for direct sampling of the $L_2$-Informed Set is given in \cite{gammell2018informed}.   
	\subsection{Relevant Region}
	Consider the set of \textit{relevant vertices} defined as
	\begin{equation}
	\label{eq:relVertices}
	V_\mathrm{rel}= \{ \textbf{v} \in V \ | \ \mathrm{g}_\mathcal{T}(\textbf{v}) +\mathrm{h}(\textbf{v},\textbf{x}_\mathrm{g}) < c_i   \}.
	\end{equation}
	Let $\epsilon > 0$ ball around a relevant vertex $\textbf{v} \in V_\mathrm{rel}$ be defined as
	\begin{equation}
	\label{eq:relEpsilonBall}
	\mathcal{B}^{\epsilon}( \textbf{v})=\{ \textbf{x} \in \mathcal{X} \ | \ \|\textbf{x}- \textbf{v}\|_2 < \epsilon,~ \textbf{v} \in V_\mathrm{rel}   \}.
	\end{equation}
	Consider the estimate of the solution cost constrained to pass through $\textbf{x} \in  \mathcal{B}^{\epsilon}( \textbf{v})$
	\begin{equation}
	\label{eq:relPathcost}
	\hat{f}_\textbf{v}(\textbf{x})= \mathrm{d}_\ell(\textbf{v,x})+\mathrm{g}_\mathcal{T}(\textbf{v})+\mathrm{h}(\textbf{x},\textbf{x}_\mathrm{g}).
	\end{equation}
	The \textit{Relevant Set} around $\textbf{v} \in V_\mathrm{rel}$ is defined as
	\begin{equation}
	\label{eq:relRegion}
	\mathcal{B}^{\epsilon}_\mathrm{rel}(\textbf{v})= \{\textbf{x} \in \mathcal{B}^{\epsilon}( \textbf{v}) \ | \ \hat{f}_\textbf{v}(\textbf{x}) < c_i \}.
	\end{equation}
	Using (\ref{eq:relVertices}), (\ref{eq:relRegion}), the \textit{Relevant Region} is defined as the union of the relevant sets around all relevant vertices
	\begin{equation}
	\label{eq:relRegionSet}
	\mathcal{X}^{\epsilon}_\mathrm{rel} = \bigcup_{\textbf{v} \in V_\mathrm{rel}} \mathcal{B}^{\epsilon}_\mathrm{rel}(\textbf{v}).
	\end{equation}
	Note that, in contrast to $ \mathcal{X}_\mathrm{inf}$ which uses the heuristic estimate $\mathrm{h}(\textbf{x}_\mathrm{s},\textbf{x})$ of the cost-to-come, $\mathcal{X}^{\epsilon}_\mathrm{rel}$  uses $\mathrm{d}_\ell(\textbf{v,x})+\mathrm{g}_\mathcal{T}(\textbf{v})$ from (\ref{eq:relPathcost}).
	This approximation considers the cost-function information (see (\ref{eq:ICstraightcost})), the structure of $\mathcal{T}$, and hence the topology of $\mathcal{X}_\mathrm{free}$.
	While the $L_2$-norm is still a consistent heuristic for cost-maps with $C(\textbf{x}) \geq 1$ for all $\textbf{x} \in \mathcal{X}$, it does not take into account  $C$ or $\mathcal{X}_\mathrm{obs}$.
	It may provide a poor estimate of the solution cost, leading to $\mathcal{M}(\mathcal{X}_\mathrm{inf}) \approx \mathcal{M}(\mathcal{X})$.
	Informed Sampling effectively resorts to uniform random sampling in this case. 
	The set $\mathcal{X}^{\epsilon}_\mathrm{rel}$ alleviates this dependence on a heuristic. 
	The value of $\epsilon$, which controls the size of the Relevant Set, is taken to be slightly greater than the step-size parameter $\eta$ (in our benchmarking simulations, we used $\epsilon=1.5\eta $).
	The step-size parameter $\eta$ in SBMP controls the maximum edge length in $\mathcal{G}$ \cite{gammell2018informed}.
	Note that a very small value of $\epsilon$ would hinder exploration, while a large value of $\epsilon$ may provide a poor estimate of the cost-to-come in (\ref{eq:relPathcost}), as the edge $(\textbf{x},\textbf{v})$ may not be feasible. 
	The following theorem proves that for any $\epsilon>0$, $\mathcal{B}^{\epsilon}_\mathrm{rel}(\textbf{v})$ is not a singleton.    
	\begin{figure}[t]
		\centering
		\includegraphics[width=.8\columnwidth,height=0.48\columnwidth]{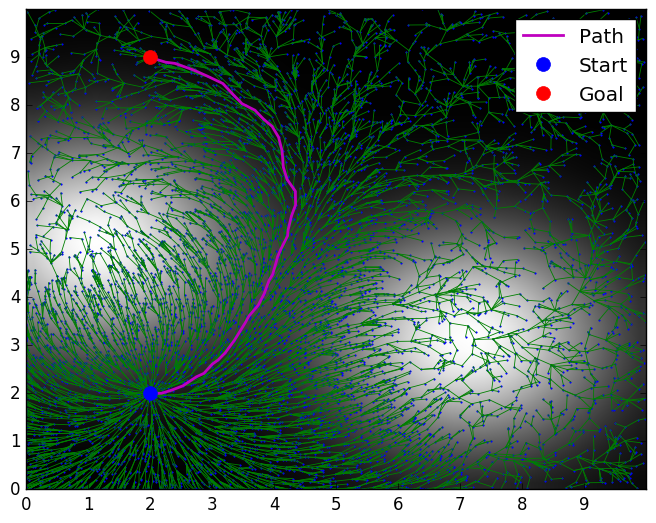}
		\caption{Planning on a "potential-field" like cost-map. The objective is to reach the goal state while avoiding the two danger (white) regions.}
		\label{fig:potential_costmap}
	\end{figure}
	\begin{theorem}  \label{thm:notSingleton}
		For every $\normalfont{\textbf{v}} \in V_\mathrm{rel}$, there exists $\delta>0 $, such that, for all $\normalfont{\textbf{x}} \in \mathcal{B}^{\delta}( \normalfont{\textbf{v}})$, it follows that $\hat{f}_{\normalfont{\textbf{v}}}(\normalfont{\textbf{x}})<c_i $. 
	\end{theorem}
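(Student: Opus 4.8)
The plan is to exploit the fact that membership in $V_\mathrm{rel}$ is a \emph{strict} inequality, which leaves a positive budget that the small perturbation introduced by moving away from $\textbf{v}$ must fit inside. Concretely, since $\textbf{v} \in V_\mathrm{rel}$, I would first introduce the gap
\begin{equation*}
\gamma = c_i - \big(\mathrm{g}_\mathcal{T}(\textbf{v}) + \mathrm{h}(\textbf{v},\textbf{x}_\mathrm{g})\big) > 0,
\end{equation*}
so that the whole problem reduces to choosing $\delta>0$ small enough that the excess incurred by replacing $\textbf{v}$ with a nearby $\textbf{x}$ stays strictly below $\gamma$.

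Next I would rewrite $\hat{f}_\textbf{v}(\textbf{x})$ so that this constant relevant-vertex quantity appears explicitly. Applying the triangle inequality of the consistent heuristic, $\mathrm{h}(\textbf{x},\textbf{x}_\mathrm{g}) \le \mathrm{h}(\textbf{x},\textbf{v}) + \mathrm{h}(\textbf{v},\textbf{x}_\mathrm{g})$, to the definition (\ref{eq:relPathcost}) yields
\begin{equation*}
\hat{f}_\textbf{v}(\textbf{x}) \le \mathrm{d}_\ell(\textbf{v},\textbf{x}) + \mathrm{h}(\textbf{x},\textbf{v}) + \big(\mathrm{g}_\mathcal{T}(\textbf{v}) + \mathrm{h}(\textbf{v},\textbf{x}_\mathrm{g})\big) = \mathrm{d}_\ell(\textbf{v},\textbf{x}) + \mathrm{h}(\textbf{x},\textbf{v}) + c_i - \gamma .
\end{equation*}
It therefore suffices to show $\mathrm{d}_\ell(\textbf{v},\textbf{x}) + \mathrm{h}(\textbf{x},\textbf{v}) < \gamma$ for $\textbf{x}$ near $\textbf{v}$. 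I emphasize that this triangle-inequality step is what makes the argument work for an \emph{arbitrary} consistent heuristic rather than only the $L_2$-norm, because it converts the cost-to-go term $\mathrm{h}(\textbf{x},\textbf{x}_\mathrm{g})$ into a purely local perturbation $\mathrm{h}(\textbf{x},\textbf{v})$ plus the constant $\mathrm{g}_\mathcal{T}(\textbf{v})+\mathrm{h}(\textbf{v},\textbf{x}_\mathrm{g})$.

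Then I would bound both remaining terms linearly in $\|\textbf{x}-\textbf{v}\|_2$ using continuity of $C$. Fixing some radius $r_0>0$ and setting $M = \sup\{\,C(\textbf{y}) : \|\textbf{y}-\textbf{v}\|_2 \le r_0\,\}$, the supremum is finite (and, since $C\ge 1$, satisfies $M\ge 1$) because $C$ is continuous on a compact closed ball. For $\|\textbf{x}-\textbf{v}\|_2 \le r_0$ the entire segment lies in that ball, so (\ref{eq:ICstraightcost}) gives $\mathrm{d}_\ell(\textbf{v},\textbf{x}) \le M\|\textbf{x}-\textbf{v}\|_2$. Because $\mathrm{h}$ under-estimates the straight-line path cost and $\mathrm{d}_\ell$ is symmetric, $\mathrm{h}(\textbf{x},\textbf{v}) \le \mathrm{d}_\ell(\textbf{x},\textbf{v}) = \mathrm{d}_\ell(\textbf{v},\textbf{x}) \le M\|\textbf{x}-\textbf{v}\|_2$, so the combined term is at most $2M\|\textbf{x}-\textbf{v}\|_2$.

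Finally, choosing any $\delta$ with $0 < \delta < \min\{\,r_0,\ \gamma/(2M)\,\}$ closes the argument: for every $\textbf{x} \in \mathcal{B}^{\delta}(\textbf{v})$ the combined term is at most $2M\delta < \gamma$, hence $\hat{f}_\textbf{v}(\textbf{x}) < c_i$. I expect the only genuine obstacle to be the cost-to-go term, since a general consistent heuristic is not assumed continuous a priori; the crux is recognizing that consistency together with the under-estimate property force $\mathrm{h}(\textbf{x},\textbf{v}) \to 0$ as $\textbf{x}\to\textbf{v}$, after which the $\mathrm{d}_\ell$ estimate and the choice of $\delta$ are routine.
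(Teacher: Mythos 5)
Your proof is correct, and it rests on the same core observation as the paper's: membership of $\textbf{v}$ in $V_\mathrm{rel}$ is a strict inequality, so there is a positive slack $\gamma = c_i - \hat{f}_\textbf{v}(\textbf{v})$, and one only needs the perturbation $\hat{f}_\textbf{v}(\textbf{x}) - \hat{f}_\textbf{v}(\textbf{v})$ to be smaller than that slack near $\textbf{v}$. Where you differ is in how that smallness is established. The paper simply asserts that $\mathrm{d}_\ell(\cdot,\textbf{v})$ and $\mathrm{h}(\cdot,\textbf{x}_\mathrm{g})$ are continuous, invokes continuity of $\hat{f}_\textbf{v}$ at $\textbf{v}$ with $\zeta = c_i - \hat{f}_\textbf{v}(\textbf{v})$, and is done in three lines. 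You instead decompose the cost-to-go term via the triangle inequality, $\mathrm{h}(\textbf{x},\textbf{x}_\mathrm{g}) \le \mathrm{h}(\textbf{x},\textbf{v}) + \mathrm{h}(\textbf{v},\textbf{x}_\mathrm{g})$, and then bound both $\mathrm{d}_\ell(\textbf{v},\textbf{x})$ and $\mathrm{h}(\textbf{x},\textbf{v})$ by $M\|\textbf{x}-\textbf{v}\|_2$ using the boundedness of $C$ on a compact ball and the admissibility of $\mathrm{h}$. This buys you two things the paper's version does not provide: an explicit, constructive $\delta = \min\{r_0,\gamma/(2M)\}$, and independence from any a priori continuity assumption on the heuristic --- you correctly note that a general consistent heuristic is only asserted, not proven, to be continuous in the paper, and your argument derives the needed local behavior from consistency plus the under-estimate property alone. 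The paper's version buys brevity. One small technicality that both arguments gloss over: the closed ball of radius $r_0$ (and the segment from $\textbf{v}$ to $\textbf{x}$) should be intersected with, or assumed to lie in, $\mathcal{X}$ for $C$ and hence $\mathrm{d}_\ell$ to be defined there; this does not affect the substance of either proof.
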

	\begin{proof}
		Note from  (\ref{eq:relPathcost}) that for each given $\normalfont{\textbf{v}} \in V_\mathrm{rel}$ the function
		$\hat{f}_\textbf{v}$ is continuous in  $\normalfont{\textbf{x}}$ since both 
		$\mathrm{d}_\ell( \cdot ,\textbf{v}),\mathrm{h}( \cdot ,\textbf{x}_\mathrm{g}) $ are continuous. 
		Also, $\hat{f}_\textbf{v}(\textbf{v})=\mathrm{g}_\mathcal{T}(\textbf{v}) +\mathrm{h}(\textbf{v},\textbf{x}_\mathrm{g}) < c_i $ since $\textbf{v} \in V_\mathrm{rel} $. 
		Since $\hat{f}_\textbf{v}$ is continuous at $\textbf{v}$, it follows that for any $\zeta > 0$  there exists $\delta >0 $ such that
		$\textbf{x} \in \mathcal{B}^{\delta}( \textbf{v})$
		implies  that $| \hat{f}_\textbf{v}(\textbf{x})-\hat{f}_\textbf{v}(\textbf{v}) |<\zeta $.
		Choosing $\zeta = c_i-\hat{f}_\textbf{v}(\textbf{v}) >0$ one then obtains that  for all
		$\textbf{x} \in \mathcal{B}^{\delta}( \textbf{v})$ we have that 
		$ |\hat{f}_\textbf{v}(\textbf{x})-\hat{f}_\textbf{v}(\textbf{v}) | < c_i-\hat{f}_\textbf{v}(\textbf{v})$ and hence
		$ \hat{f}_\textbf{v}(\textbf{x})<c_i $. 
	\end{proof}
	\begin{corollary}
		Let $\normalfont{\textbf{v}} \in V_\mathrm{rel}$.
		For every $\epsilon > 0$ there exists $\delta>0$ such  $\mathcal{B}^{\delta}( \normalfont{\textbf{v}}) \subset \mathcal{B}^{\epsilon}_\mathrm{rel}(\normalfont{\textbf{v}})$ .
	\end{corollary}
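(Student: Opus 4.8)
The plan is to obtain the corollary as an immediate consequence of Theorem~\ref{thm:notSingleton}, the only subtlety being to reconcile the radius furnished by that theorem with the prescribed $\epsilon$. The corollary differs from the theorem in exactly one respect: it additionally requires the ball to sit inside the prescribed $\epsilon$-neighborhood of $\textbf{v}$, so the radius cannot be taken verbatim from the theorem.

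First, I would fix $\textbf{v} \in V_\mathrm{rel}$ and apply Theorem~\ref{thm:notSingleton}, which guarantees the existence of some radius $\delta_0 > 0$ such that $\hat{f}_\textbf{v}(\textbf{x}) < c_i$ for every $\textbf{x} \in \mathcal{B}^{\delta_0}(\textbf{v})$. Note that this $\delta_0$ is produced by the continuity argument and bears no a priori relation to the given $\epsilon$; in particular it may exceed $\epsilon$, so $\mathcal{B}^{\delta_0}(\textbf{v})$ need not be contained in $\mathcal{B}^{\epsilon}(\textbf{v})$.

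Next, given an arbitrary $\epsilon > 0$, I would set $\delta = \min\{\delta_0, \epsilon\} > 0$ and verify the inclusion pointwise. Take any $\textbf{x} \in \mathcal{B}^{\delta}(\textbf{v})$, so that $\|\textbf{x} - \textbf{v}\|_2 < \delta$. Since $\delta \le \epsilon$, we have $\|\textbf{x} - \textbf{v}\|_2 < \epsilon$, hence $\textbf{x} \in \mathcal{B}^{\epsilon}(\textbf{v})$ by~(\ref{eq:relEpsilonBall}). Since $\delta \le \delta_0$, the point $\textbf{x}$ also lies in $\mathcal{B}^{\delta_0}(\textbf{v})$, so Theorem~\ref{thm:notSingleton} yields $\hat{f}_\textbf{v}(\textbf{x}) < c_i$. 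Both conditions in the definition~(\ref{eq:relRegion}) of $\mathcal{B}^{\epsilon}_\mathrm{rel}(\textbf{v})$ are therefore met, giving $\textbf{x} \in \mathcal{B}^{\epsilon}_\mathrm{rel}(\textbf{v})$ and hence $\mathcal{B}^{\delta}(\textbf{v}) \subset \mathcal{B}^{\epsilon}_\mathrm{rel}(\textbf{v})$.

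Since the argument is essentially a restatement of Theorem~\ref{thm:notSingleton}, I anticipate no genuine obstacle. The single point requiring care is the truncation $\delta = \min\{\delta_0, \epsilon\}$: the theorem alone only controls the cost inequality $\hat{f}_\textbf{v} < c_i$, while the corollary additionally demands containment in the $\epsilon$-ball, and taking the minimum of the two radii enforces both requirements simultaneously.
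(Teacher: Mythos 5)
Your argument is correct and is exactly the route the paper intends: the corollary is stated without explicit proof as an immediate consequence of Theorem~\ref{thm:notSingleton}, and your truncation $\delta=\min\{\delta_0,\epsilon\}$ supplies the one small detail (containment in the $\epsilon$-ball) that the theorem alone does not give. Nothing is missing.
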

	%
	\begin{theorem}
		\label{thm:relSubsetInf}
		For any $\epsilon>0$, the Relevant Region $\mathcal{X}^{\epsilon}_\mathrm{rel}$ is a subset of the Informed Set $\mathcal{X}_\mathrm{inf}$.  
	\end{theorem}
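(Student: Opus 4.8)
The plan is to reduce the set inclusion to a single pointwise inequality and then exploit the triangle inequality together with the under-estimating property of the consistent heuristic $\mathrm{h}$. Since $\mathcal{X}^{\epsilon}_\mathrm{rel} = \bigcup_{\textbf{v} \in V_\mathrm{rel}} \mathcal{B}^{\epsilon}_\mathrm{rel}(\textbf{v})$ is a union over the relevant vertices, it suffices to show $\mathcal{B}^{\epsilon}_\mathrm{rel}(\textbf{v}) \subseteq \mathcal{X}_\mathrm{inf}$ for each fixed $\textbf{v} \in V_\mathrm{rel}$; the inclusion for the union then follows immediately. So I would fix an arbitrary $\textbf{v} \in V_\mathrm{rel}$ and an arbitrary $\textbf{x} \in \mathcal{B}^{\epsilon}_\mathrm{rel}(\textbf{v})$, for which, by (\ref{eq:relRegion}), $\hat{f}_\textbf{v}(\textbf{x}) = \mathrm{d}_\ell(\textbf{v},\textbf{x}) + \mathrm{g}_\mathcal{T}(\textbf{v}) + \mathrm{h}(\textbf{x},\textbf{x}_\mathrm{g}) < c_i$.

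The goal is then to verify the membership condition of (\ref{eq:informedSetDef}), namely $\mathrm{h}(\textbf{x}_\mathrm{s},\textbf{x}) + \mathrm{h}(\textbf{x},\textbf{x}_\mathrm{g}) < c_i$. Because the cost-to-go term $\mathrm{h}(\textbf{x},\textbf{x}_\mathrm{g})$ appears both in $\hat{f}_\textbf{v}(\textbf{x})$ and in the informed-set condition, the crux reduces to establishing the single inequality $\mathrm{h}(\textbf{x}_\mathrm{s},\textbf{x}) \leq \mathrm{d}_\ell(\textbf{v},\textbf{x}) + \mathrm{g}_\mathcal{T}(\textbf{v})$. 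I would prove this by first invoking the triangle inequality obeyed by the consistent heuristic, $\mathrm{h}(\textbf{x}_\mathrm{s},\textbf{x}) \leq \mathrm{h}(\textbf{x}_\mathrm{s},\textbf{v}) + \mathrm{h}(\textbf{v},\textbf{x})$, and then bounding each of the two right-hand terms separately.

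These two bounds are where the hypotheses genuinely enter, and I expect them to be the main (though short) obstacle. For the first term, $\mathrm{h}(\textbf{x}_\mathrm{s},\textbf{v}) \leq \mathrm{g}_\mathcal{T}(\textbf{v})$ holds because $\mathrm{g}_\mathcal{T}(\textbf{v})$ is the accumulated cost of an actual feasible path from $\textbf{x}_\mathrm{s}$ to $\textbf{v}$ in the tree $\mathcal{T}$, while $\mathrm{h}$ under-estimates the cost of any such path; equivalently, since every edge-cost dominates the corresponding Euclidean length (as $C \geq 1$), summing along the tree path and applying the triangle inequality for $\|\cdot\|_2$ yields $\mathrm{g}_\mathcal{T}(\textbf{v}) \geq \|\textbf{v}-\textbf{x}_\mathrm{s}\|_2 = \mathrm{h}(\textbf{x}_\mathrm{s},\textbf{v})$. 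For the second term, $\mathrm{h}(\textbf{v},\textbf{x}) = \|\textbf{v}-\textbf{x}\|_2 \leq \mathrm{d}_\ell(\textbf{v},\textbf{x})$, again using $C \geq 1$ so that the integral of cost in (\ref{eq:ICstraightcost}) along the straight segment is at least its Euclidean length. Chaining these gives $\mathrm{h}(\textbf{x}_\mathrm{s},\textbf{x}) \leq \mathrm{g}_\mathcal{T}(\textbf{v}) + \mathrm{d}_\ell(\textbf{v},\textbf{x})$.

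Finally, I would close the argument by adding $\mathrm{h}(\textbf{x},\textbf{x}_\mathrm{g})$ to both sides, obtaining $\mathrm{h}(\textbf{x}_\mathrm{s},\textbf{x}) + \mathrm{h}(\textbf{x},\textbf{x}_\mathrm{g}) \leq \mathrm{d}_\ell(\textbf{v},\textbf{x}) + \mathrm{g}_\mathcal{T}(\textbf{v}) + \mathrm{h}(\textbf{x},\textbf{x}_\mathrm{g}) = \hat{f}_\textbf{v}(\textbf{x}) < c_i$, so that $\textbf{x} \in \mathcal{X}_\mathrm{inf}$. Since $\textbf{x}$ and $\textbf{v}$ were arbitrary, every relevant set lies inside $\mathcal{X}_\mathrm{inf}$, and hence so does their union $\mathcal{X}^{\epsilon}_\mathrm{rel}$, with the conclusion holding for every $\epsilon > 0$. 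The key qualitative takeaway is that replacing the heuristic cost-to-come estimate $\mathrm{h}(\textbf{x}_\mathrm{s},\textbf{x})$ by the tighter tree-based quantity $\mathrm{d}_\ell(\textbf{v},\textbf{x}) + \mathrm{g}_\mathcal{T}(\textbf{v})$ can only shrink the admissible region, which is precisely the containment being claimed.
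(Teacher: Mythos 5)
Your proof is correct and follows essentially the same route as the paper's: reduce to a single relevant set, bound $\mathrm{h}(\textbf{x}_\mathrm{s},\textbf{x})$ via the heuristic's triangle inequality together with $\mathrm{h}(\textbf{x}_\mathrm{s},\textbf{v}) \leq \mathrm{g}_\mathcal{T}(\textbf{v})$ and $\mathrm{h}(\textbf{v},\textbf{x}) \leq \mathrm{d}_\ell(\textbf{v},\textbf{x})$ (both from $C \geq 1$), and add the common cost-to-go term. If anything, your use of non-strict intermediate inequalities and your explicit justification of $\mathrm{h}(\textbf{x}_\mathrm{s},\textbf{v}) \leq \mathrm{g}_\mathcal{T}(\textbf{v})$ by summing edge costs along the tree path is slightly more careful than the paper's strict-inequality phrasing.
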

	\begin{proof}
		Let $\textbf{x} \in \mathcal{X}^{\epsilon}_\mathrm{rel}$. 
		Then there exists $\textbf{v} \in V_\mathrm{rel}$, so that $\textbf{x} \in \mathcal{B}^{\epsilon}_\mathrm{rel}(\textbf{v}) $, 
		and hence $\mathrm{d}_\ell(\textbf{x,v})+\mathrm{g}_\mathcal{T}(\textbf{v})+\mathrm{h}(\textbf{x},\textbf{x}_\mathrm{g}) < c_i$. 
		Since the heuristic function is consistent, 
		$\mathrm{h}(\textbf{x,v})<\mathrm{d}_\ell(\textbf{x,v}) $ and $\mathrm{h}(\textbf{v},\textbf{x}_\mathrm{s}) <\mathrm{g}_\mathcal{T}(\textbf{v})$. 
		Using the triangle inequality, it follows that,
		$\mathrm{h}(\textbf{x}_\mathrm{s},\textbf{x}) < \mathrm{h}(\textbf{x,v}) + \mathrm{h}(\textbf{v},\textbf{x}_\mathrm{s}) $.
		Combining the above inequalities yields
		$\mathrm{h}(\textbf{x}_\mathrm{s},\textbf{x})+\mathrm{h}(\textbf{x},\textbf{x}_\mathrm{g}) < \mathrm{d}_\ell(\textbf{x,v})+\mathrm{g}_\mathcal{T}(\textbf{v})+\mathrm{h}(\textbf{x},\textbf{x}_\mathrm{g}) < c_i $. 
		Hence, $ \textbf{x} \in \mathcal{X}_\mathrm{inf}$. It follows that $\mathcal{X}^{\epsilon}_\mathrm{rel} \subset \mathcal{X}_\mathrm{inf}$. 
	\end{proof} 
	Theorem~\ref{thm:relSubsetInf} implies that generating samples in $\mathcal{X}^{\epsilon}_\mathrm{rel}$ does not lead to redundant exploration outside $ \mathcal{X}_\mathrm{inf}$.
	However, note that sampling in $\mathcal{X}^{\epsilon}_\mathrm{rel}$ is not a necessary condition for improving the current solution, i.e., there may be points $\textbf{x} \in  \mathcal{X}_\mathrm{inf}$ such that $\textbf{x} \notin \mathcal{X}^{\epsilon}_\mathrm{rel}$ which may improve the current solution.
	Relevant Region sampling is thus utilized in conjunction with Informed/Uniform Sampling.
	As shown in the numerical examples later on, this interplay of exploration by Informed Sampling, combined with focusing properties of Relvant Region, leads to accelerated convergence.
	\section{SAMPLING IN THE RELEVANT REGION}
	Since $\mathcal{X}^{\epsilon}_\mathrm{rel} $ depends on $ \mathcal{T}$, a direct sampling strategy is not possible.
	Hence, the proposed sampling strategy proceeds by first selecting a relevant vertex $\textbf{v}_\mathrm{p} \in V_\mathrm{rel} $, sampling a random direction $\hat{\textbf{e}}$, $\|\hat{\textbf{e}}\|_2=1$ and finding the maximum magnitude of travel $\gamma_\mathrm{rel}>0$ along $\hat{\textbf{e}}$, so that for all $\gamma \in (0,\gamma_\mathrm{rel})$ the new sample $\textbf{x}=\textbf{v}_\mathrm{p}+\gamma\hat{\textbf{e}} \in \mathcal{B}^{\epsilon}_\mathrm{rel}(\textbf{v}_\mathrm{p}) $. Please see Fig.~\ref{fig:relregionSchematic}. 
	Note that Theorem \ref{thm:notSingleton} guarantees the existence of $\gamma_\mathrm{rel}$.
	Concretely, the following optimization problem needs to be solved:
	\begin{equation}
	\label{eq:optProblemGammaRel}
	\begin{aligned}
	\sup_{\gamma \in (0,\epsilon)} & \ \gamma, \\
	\text{subject to:} & \ \hat{f}_{\textbf{v}_\mathrm{p}}(\textbf{v}_\mathrm{p}+\gamma\hat{\textbf{e}}) < c_i.
	\end{aligned}
	\end{equation}
	\subsection{Case 1: Uniform Cost-Map} 
	\noindent Consider the problem (\ref{eq:optProblemGammaRel}) with $C(\textbf{x})=1$ for all $\textbf{x} \in \mathcal{X}$. Using the $L_2$-norm heuristic in (\ref{eq:relPathcost}), the inequality in (\ref{eq:optProblemGammaRel}) yields,
	\begin{equation}
	\label{eq:inequalUniCostmap}
	\hat{f}_{\textbf{v}_\mathrm{p}}(\textbf{v}_\mathrm{p}+\gamma\hat{\textbf{e}})=\gamma +\mathrm{g}_\mathcal{T}(\textbf{v}_\mathrm{p})+\|\textbf{v}_\mathrm{p}+\gamma\hat{\textbf{e}}-\textbf{x}_\mathrm{g}\|_2 < c_i.
	\end{equation}  
	Rearrange the terms in (\ref{eq:inequalUniCostmap}) to obtain
	\begin{equation}
	\label{eq:rearrInequalUniCostmap}
	\|\textbf{v}_\mathrm{p}+\gamma\mathrm{\hat{\textbf{e}}}-\textbf{x}_\mathrm{g}\|_2 < c_i-\mathrm{g}_{\mathcal{T}}(\textbf{v}_\mathrm{p})-\gamma.
	\end{equation}
	To ensure that the RHS in (\ref{eq:rearrInequalUniCostmap}) is positive, choose
	\begin{equation}
	\label{eq:uniformFeasibility}
	\gamma  < c_i-\mathrm{g}_\mathcal{T}(\textbf{v}_\mathrm{p}).
	\end{equation}
	Let $\textbf{x}_\mathrm{pg}=\textbf{v}_\mathrm{p}-\textbf{x}_\mathrm{g}$ and $g_\mathrm{gp}=c_i-\mathrm{g}_{\mathcal{T}}(\textbf{v}_\mathrm{p})$.
	Also note that $\textbf{x}_\mathrm{pg}^\mathsf{T}\textbf{x}_\mathrm{pg}=\mathrm{h}^2(\textbf{v}_\mathrm{p},\textbf{x}_\mathrm{g})$  and $\textbf{x}_\mathrm{pg}^\mathsf{T}\hat{\textbf{e}}=\mathrm{h}(\textbf{v}_\mathrm{p},\textbf{x}_\mathrm{g}) \cos\theta$, where $\theta$ is the angle between the vectors $\textbf{x}_\mathrm{pg}$ and $\hat{\textbf{e}}$. Squaring both sides in (\ref{eq:rearrInequalUniCostmap}) yields,  
	\begin{equation*}
	\begin{aligned}
	& \mathrm{h}^2(\textbf{v}_\mathrm{p},\textbf{x}_\mathrm{g})+2\gamma\textbf{x}_\mathrm{pg}^\mathsf{T}\mathrm{\hat{e}}+\gamma^2<g^2_\mathrm{gp}-2\gamma g_\mathrm{gp} + \gamma^2 \\
	& \text{and hence } \gamma <\frac{g^2_\mathrm{gp}-\mathrm{h}^2(\textbf{v}_\mathrm{p},\textbf{x}_\mathrm{g})}{2(\textbf{x}_\mathrm{pg}^\mathsf{T}\mathrm{\hat{\textbf{e}}}+g_\mathrm{gp})}.
	\end{aligned}
	\end{equation*}
	Define the RHS in the above inequality as
	\begin{equation}
	\gamma_\mathrm{uni}= \frac{(c_i-\mathrm{g}_{\mathcal{T}}(\textbf{v}_\mathrm{p}))^2-\mathrm{h}^2(\textbf{v}_\mathrm{p},\textbf{x}_\mathrm{g})}{2\big[\mathrm{h}(\textbf{v}_\mathrm{p},\textbf{x}_\mathrm{g}) \cos\theta+(c_i-\mathrm{g}_{\mathcal{T}}(\textbf{v}_\mathrm{p}))\big]} .
	\end{equation}
	Note that $\gamma_\mathrm{uni}>0$ for $\textbf{v}_\mathrm{p} \in V_\mathrm{rel}$, and attains its maximum value $\overline{\gamma}_\mathrm{uni}$ at $\theta=\pi$, in which case,
	\begin{equation*}
	\overline{\gamma}_\mathrm{uni}=\big( c_i-\mathrm{g}_\mathcal{T}(\textbf{v}_\mathrm{p})+\mathrm{h}(\textbf{v}_\mathrm{p},\textbf{x}_\mathrm{g}) \big) /2,
	\end{equation*}  
	and also, $\overline{\gamma}_\mathrm{uni}< c_i-\mathrm{g}_\mathcal{T}(\textbf{v}_\mathrm{p})$ for $\textbf{v}_\mathrm{p} \in V_\mathrm{rel}$, satisfying (\ref{eq:uniformFeasibility}). 
	Thus, the solution to problem (\ref{eq:optProblemGammaRel}) for uniform cost-map is
	\begin{equation}
	\label{eq:solUniformCostMap}
	\gamma_\mathrm{rel}=\min(\gamma_\mathrm{uni},\epsilon).
	\end{equation} 
	\begin{figure}[]
		\centering
		\includegraphics[width=.55\columnwidth,height=0.3\columnwidth]{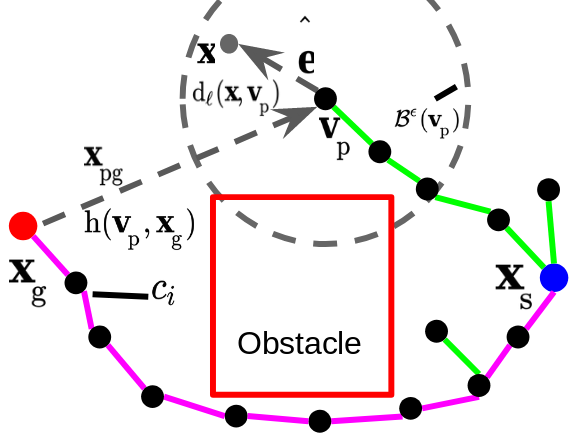}
		\caption{A schematic for Relevant Region sampling.}
		\label{fig:relregionSchematic}
	\end{figure}
	\vspace{-6.00mm}
	\subsection{Case 2: General Cost-Maps}
	\noindent 
	Now consider the problem (\ref{eq:optProblemGammaRel}) with $C(\textbf{x}) > 1$ for all $\textbf{x} \in \mathcal{X}$. The following inequality needs to be solved for $\gamma$,
	\begin{equation}
	\label{eq:inequalGenCostmap}
	\gamma \int_{0}^1 C(\textbf{v}_\mathrm{p}+\gamma\mathrm{\hat{\textbf{e}}}\mathrm{s}) \mathrm{ds}+\mathrm{g}_{\mathcal{T}}(\textbf{v}_\mathrm{p})+\mathrm{h}(\textbf{v}_\mathrm{p}+\gamma\mathrm{\hat{\textbf{e}}},\textbf{x}_\mathrm{g}) <c_i.
	\end{equation}
	Often, $C$ may not have a tractable closed-form expression and hence the planner has access only to the value of $C$ at any point in the search space.
	In order to avoid a computationally expensive procedure to solve (\ref{eq:inequalGenCostmap}), we let  
	\begin{equation} \label{eq:approx}
	\mathrm{d}_\ell(\textbf{v}_\mathrm{p},\textbf{v}_\mathrm{p}+\gamma\mathrm{\hat{\textbf{e}}} )=\gamma \int_{0}^1 C(\textbf{v}_\mathrm{p}+\gamma \mathrm{\hat{\textbf{e}}} \mathrm{s}) \mathrm{ds} \approx \gamma C(\textbf{v}_\mathrm{p})
	\end{equation} 
	Note that (\ref{eq:approx}) uses a zeroth-order approximation of the integrand to estimate the integral. 
	Higher order approximations are possible, but these will result in a computationally more involved process to find $\gamma$ (see below).
	It follows from (\ref{eq:approx}) that 
	\begin{equation} \label{eq:approxInequalGenCostmap}
	\gamma C(\textbf{v}_\mathrm{p}) +\mathrm{g}_{\mathcal{T}}(\textbf{v}_\mathrm{p})+\|\textbf{v}_\mathrm{p}+\gamma\hat{\textbf{e}}-\textbf{x}_\mathrm{g}\|_2 < c_i,
	\end{equation}
	\begin{equation} \label{eq:approxInequalGenCostmapB}
	\text{or, } \|\textbf{v}_\mathrm{p}+\gamma\hat{\textbf{e}}-\textbf{x}_\mathrm{g}\|_2 < c_i-\mathrm{g}_{\mathcal{T}}(\textbf{v}_\mathrm{p})-\gamma C(\textbf{v}_\mathrm{p}).
	\end{equation}
	To ensure that the RHS of (\ref{eq:approxInequalGenCostmapB}) is positive, choose
	\begin{equation}
	\label{eq:generalFeasibility}
	\gamma < \big(c_i-\mathrm{g}_{\mathcal{T}}(\textbf{v}_\mathrm{p})\big)/C(\textbf{v}_\mathrm{p}).
	\end{equation}
	Let again $\textbf{x}_\mathrm{pg}=\textbf{v}_\mathrm{p}-\textbf{x}_\mathrm{g}$, $g_\mathrm{gp}=c_i-\mathrm{g}_{\mathcal{T}}(\textbf{v}_\mathrm{p})$, $\textbf{x}_\mathrm{pg}^\mathsf{T}\textbf{x}_\mathrm{pg}=\mathrm{h}^2(\textbf{v}_\mathrm{p},\textbf{x}_\mathrm{g})$ and $\textbf{x}_\mathrm{pg}^\mathsf{T}\hat{\textbf{e}}=\mathrm{h}(\textbf{v}_\mathrm{p},\textbf{x}_\mathrm{g}) \cos\theta$, where $\theta$ is the angle between the vectors $\textbf{x}_\mathrm{pg}$ and $\hat{\textbf{e}}$.
	Squaring both sides in (\ref{eq:approxInequalGenCostmapB}) and simplifying yields,
	\begin{equation}
	\label{eq:quadraticInequality}
	\gamma^2(C^2(\textbf{v}_\mathrm{p})-1)-2\gamma ( g_\mathrm{gp}C(\textbf{v}_\mathrm{p})+\textbf{x}_\mathrm{pg}^\mathsf{T}\hat{\textbf{e}} )
	+ g^2_\mathrm{gp}-\mathrm{h}^2(\textbf{v}_\mathrm{p},\textbf{x}_\mathrm{g})>0.
	\end{equation}
	Let $\gamma_1, \gamma_2$ be the roots of the quadratic equation corresponding to inequality (\ref{eq:quadraticInequality}), and assume $\gamma_2>\gamma_1$.
	\begin{equation}
	\label{eq:solQuadraticInequal}
	\begin{aligned}
	& \gamma_2= \frac{g_\mathrm{gp}C(\textbf{v}_\mathrm{p})+\mathrm{h}(\textbf{v}_\mathrm{p},\textbf{x}_\mathrm{g}) \cos\theta  + \sqrt{\Delta}}{(C^2(\textbf{v}_\mathrm{p})-1)} \\
	& \gamma_1 = \frac{ g_\mathrm{gp}C(\textbf{v}_\mathrm{p})+\mathrm{h}(\textbf{v}_\mathrm{p},\textbf{x}_\mathrm{g}) \cos\theta  - \sqrt{\Delta}}{(C^2(\textbf{v}_\mathrm{p})-1)} \\
	& \Delta = (g_\mathrm{gp}C(\textbf{v}_\mathrm{p})+\mathrm{h}(\textbf{v}_\mathrm{p},\textbf{x}_\mathrm{g}) \cos\theta)^2\\
	& \qquad \qquad -(C^2(\textbf{v}_\mathrm{p})-1)(g^2_\mathrm{gp}-\mathrm{h}^2(\textbf{v}_\mathrm{p},\textbf{x}_\mathrm{g})).
	\end{aligned}
	\end{equation}
	The maximum and minimum values of the radicand $\Delta$ are obtained at $\theta=0$ and $\theta=\pi$, respectively, where
	\begin{equation}
	(g_\mathrm{gp}-\mathrm{h}(\textbf{v}_\mathrm{p},\textbf{x}_\mathrm{g})C(\textbf{v}_\mathrm{p}))^2 \leq \Delta \leq (g_\mathrm{gp}+\mathrm{h}(\textbf{v}_\mathrm{p},\textbf{x}_\mathrm{g})C(\textbf{v}_\mathrm{p}))^2.
	\end{equation}    
	Hence, $\gamma_1, \gamma_2 \in \mathbb{R}_{\geq 0}$ for $\textbf{v}_\mathrm{p} \in V_\mathrm{rel}$. Then (\ref{eq:quadraticInequality}) yields,
	\begin{equation}
	\begin{aligned}
	(\gamma-\gamma_1)(\gamma-\gamma_2)>0.\ 
	\text{equivalently}, \gamma >\gamma_2 \ \text{or} \ \gamma < \gamma_1.
	\end{aligned}
	\end{equation}
	Consider the larger root $\gamma_2$ from (\ref{eq:solQuadraticInequal}).
	The minimum value of $\gamma_2$ is attained when $\theta=\pi$, so that 
	\begin{equation}
	\label{eq:gammaBar2Ineqaual}
	\begin{aligned}
	\gamma_2 \geq \frac{g_\mathrm{gp}C(\textbf{v}_\mathrm{p})-\mathrm{h}(\textbf{v}_\mathrm{p},\textbf{x}_\mathrm{g})+|g_\mathrm{gp}-\mathrm{h}(\textbf{v}_\mathrm{p},\textbf{x}_\mathrm{g})C(\textbf{v}_\mathrm{p})|}{(C^2(\textbf{v}_\mathrm{p})-1)}.
	\end{aligned}
	\end{equation}
	\IncMargin{.5em}
	\begin{algorithm}[t]
		\caption{Sampling Algorithm}
		\label{alg:algFlow}	
		$V \leftarrow \{ \textbf{x}_\mathrm{s} \} $; $E \leftarrow \phi$; $\mathcal{G} \leftarrow (V,E)$\;	
		\For{$i=1:N$} 
		{
			$c_i \leftarrow \min_{\textbf{v} \in V_\mathrm{goal}}  \mathrm{g}_\mathcal{T}(\textbf{v})$\;
			$u_\mathrm{rand} \sim \mathcal{U}(0,1)$\;
			\eIf{$u_\mathrm{rand}<p_\mathrm{rel} \ \normalfont \text{and} \ c_i<\infty$}
			{
				$\textbf{v}_\mathrm{p} \leftarrow \mathsf{chooseVertex}(V_\mathrm{rel}$)\;
				$\hat{\textbf{e}}\leftarrow \mathsf{generateDirection}()$\;
				$\gamma_\mathrm{rel} \leftarrow \mathsf{RelevantStepLimit}(\textbf{v}_\mathrm{p},\hat{\textbf{e}})$\;
				$u_\mathrm{rand} \sim \mathcal{U}(0,1)$\;
				$\textbf{x}_\mathrm{rand} \leftarrow  \textbf{v}_\mathrm{p}+(u_\mathrm{rand})^\frac{1}{d}  \gamma_\mathrm{rel}\hat{\textbf{e}}$\;
			}
			{
				$\textbf{x}_\mathrm{rand} \leftarrow \mathsf{InformedSampling()}$
			}
			$\textbf{x}_{\mathrm{new}} \leftarrow \mathsf{Extend}(\textbf{x}_\mathrm{rand})$\;		
			$\mathsf{Exploitation}(\mathcal{G})$\;
		}
		\KwRet $\mathcal{G}$
	\end{algorithm}
	\DecMargin{.5em}
	Define the RHS in (\ref{eq:gammaBar2Ineqaual}) as $\overline{\gamma}_{2}$. Simplifying yields,
	\begin{equation}
	\overline{\gamma}_{2} =
	\Bigg\{ \begin{array}{cc}
	\frac{g_\mathrm{gp}+\mathrm{h}(\textbf{v}_\mathrm{p},\textbf{x}_\mathrm{g})}{C(\textbf{v}_\mathrm{p})+1},  &  g_\mathrm{gp}<\mathrm{h}(\textbf{v}_\mathrm{p},\textbf{x}_\mathrm{g})C(\textbf{v}_\mathrm{p}),\\
	\frac{g_\mathrm{gp}-\mathrm{h}(\textbf{v}_\mathrm{p},\textbf{x}_\mathrm{g})}{C(\textbf{v}_\mathrm{p})-1},  & g_\mathrm{gp}>\mathrm{h}(\textbf{v}_\mathrm{p},\textbf{x}_\mathrm{g})C(\textbf{v}_\mathrm{p}).
	\end{array}
	\end{equation}
	Note that $\overline{\gamma}_{2}>g_\mathrm{gp}/C(\textbf{v}_\mathrm{p})$. This implies $\gamma_2>g_\mathrm{gp}/C(\textbf{v}_\mathrm{p})$, violating (\ref{eq:generalFeasibility}). Thus, $\gamma>\gamma_2$ is an infeasible solution of (\ref{eq:approxInequalGenCostmap}).
	Next, consider $\gamma_1$.
	Differentiating with respect to $\theta$, the extrema are obtained at $\theta=0,\pi$.
	Calculating the second derivative yields, $\gamma''_1(\theta=0)>0$ and $\gamma''_1(\theta=\pi)<0$.
	The maximum value of $\gamma_1$ obtained at $\theta=\pi$ is given by 
	\begin{equation}
	\begin{aligned}
	\overline{\gamma}_{1} &=
	\Bigg\{ \begin{array}{cc}
	\frac{g_\mathrm{gp}+\mathrm{h}(\textbf{v}_\mathrm{p},\textbf{x}_\mathrm{g})}{C(\textbf{v}_\mathrm{p})+1},  &  g_\mathrm{gp}>\mathrm{h}(\textbf{v}_\mathrm{p},\textbf{x}_\mathrm{g})C(\textbf{v}_\mathrm{p}),\\
	\frac{g_\mathrm{gp}-\mathrm{h}(\textbf{v}_\mathrm{p},\textbf{x}_\mathrm{g})}{C(\textbf{v}_\mathrm{p})-1},  & g_\mathrm{gp}<\mathrm{h}(\textbf{v}_\mathrm{p},\textbf{x}_\mathrm{g})C(\textbf{v}_\mathrm{p}) .
	\end{array}
	\end{aligned}
	\end{equation}
	Now, $\overline{\gamma}_{1} <g_\mathrm{gp}/C(\textbf{v}_\mathrm{p})$. This implies $\gamma_1<g_\mathrm{gp}/C(\textbf{v}_\mathrm{p})$. It follows that $\gamma<\gamma_1$ satisfies (\ref{eq:generalFeasibility}). Thus, the solution to problem (\ref{eq:optProblemGammaRel}) with the approximation in (\ref{eq:approxInequalGenCostmap}) is
	\begin{equation}
	\label{eq:solGenCostMap}
	\gamma_\mathrm{rel}=\min(\gamma_{1},\epsilon).
	\end{equation}
	For the special case when $\Delta=0$ and $\gamma_1=\gamma_2=\gamma_\mathrm{c}$, inequality (\ref{eq:quadraticInequality}) simplifies to $(\gamma-\gamma_\mathrm{c})^2>0$.
	Considering (\ref{eq:generalFeasibility}) yields $\gamma_\mathrm{rel}=\min(g_\mathrm{gp}/C(\textbf{v}_\mathrm{p}),\epsilon) $.
	Note that if $C(\textbf{v}_\mathrm{p})=1$, then inequality (\ref{eq:inequalGenCostmap}) reduces to (\ref{eq:inequalUniCostmap}) and the analysis for uniform cost-maps is applicable.
	%
	\section{PROPOSED ALGORITHM}
	The outline of the proposed algorithm in given in Algorithm \ref{alg:algFlow}. The procedure initializes a vertex at the start state $\textbf{x}_\mathrm{s}$.
	At every iteration, the current best solution cost $c_i$ is updated (line 3).
	If a sub-optimal solution exists ($c_i$ is finite), with probability $p_\mathrm{rel}$ (line 5), Relevant Region sampling is employed to generate a new random sample $\textbf{x}_\mathrm{rand}$.
	Otherwise, conventional Informed Sampling is used.
	Relevant Region sampling consists of first choosing a relevant vertex $\textbf{v}_\mathrm{p}$, generating a random direction $\hat{\textbf{e}}$ and calculating the maximum magnitude of travel along $\hat{\textbf{e}}$ (line 6-8).
	If $C(\textbf{v}_\mathrm{p})=1$, then (\ref{eq:solUniformCostMap}) is used for obtaining $\gamma_\mathrm{rel}$ along $\hat{\textbf{e}}$, else (\ref{eq:solGenCostMap}) is used.
	The exponent $1/d$ (line 10) biases the travel magnitude towards $\gamma_\mathrm{rel}$ and promotes exploration.
	After $\textbf{x}_\mathrm{rand}$ is generated, conventional SBMP modules incorporate a new vertex $\textbf{x}_\mathrm{new}$ in $\mathcal{G}$ (line 13).
	These include: a) finding the nearest neighbor $\textbf{x}_\mathrm{nearest}$ to $\textbf{x}_\mathrm{rand}$ in $\mathcal{G}$; b) local steering from $\textbf{x}_\mathrm{nearest}$ in the direction of $\textbf{x}_\mathrm{rand}$ to obtain $\textbf{x}_\mathrm{new}$; c) ensuring feasibility of edge-connections in the neighborhood of $\textbf{x}_\mathrm{new}$.
	This is followed by the exploitation module (local/global rewiring, etc).
	\begin{figure}[]
		\centering
		\includegraphics[height=.43\columnwidth]{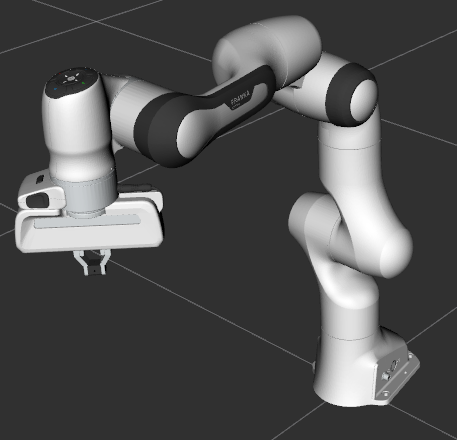}
		\includegraphics[height=.43\columnwidth]{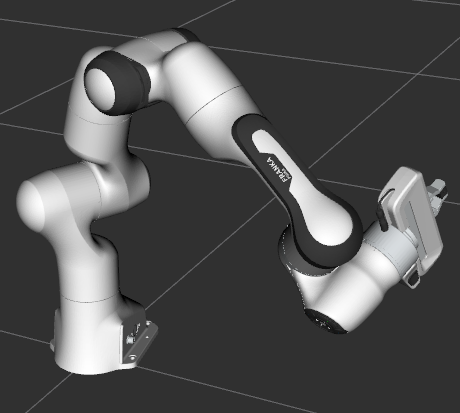}
		\caption{Planning for 7 DOF Panda Arm in the joint space from the start state (left) to a given joint goal state (right).}
		\label{fig:panda}
		\includegraphics[width=.8\columnwidth,height=0.5\columnwidth]{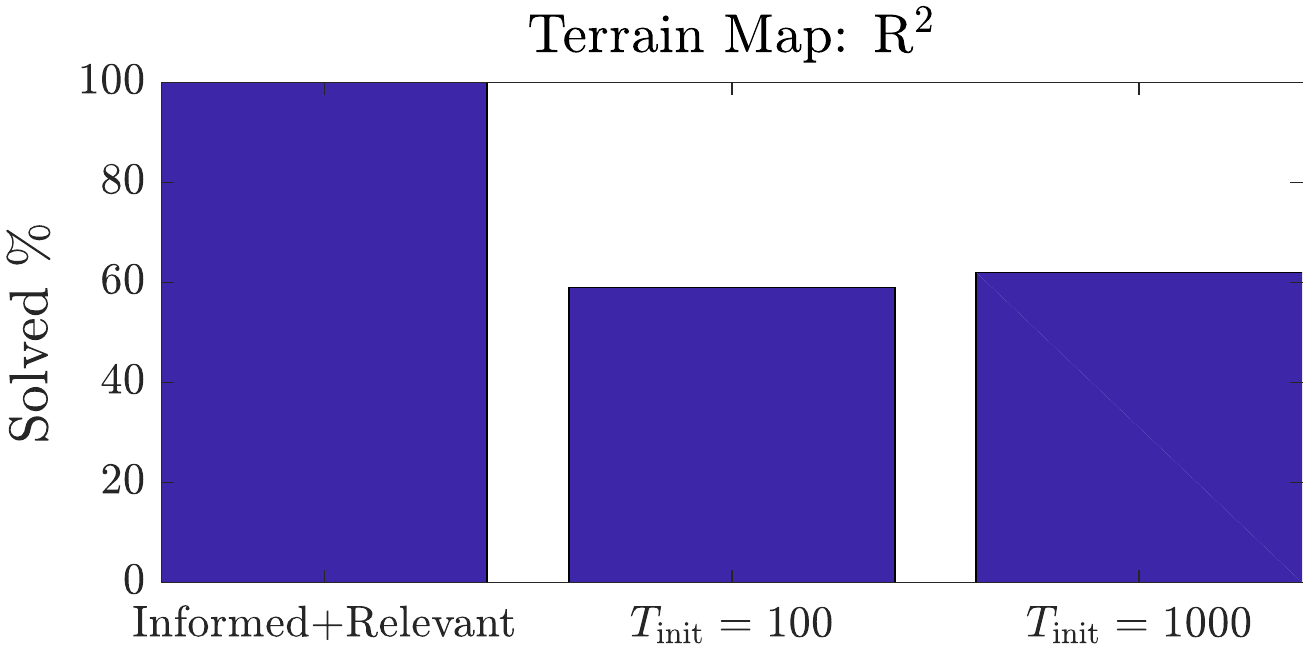}
		\caption{Percentage of successful trials (where planner found a feasible solution) with different sampling strategies.}
		\label{fig:solvedPlotTerrain2D}
	\end{figure}
	The $\mathsf{chooseVertex}$ module selects a relevant vertex to be expanded from the set $V_\mathrm{rel}$. Similar to the procedure in Guided-ESTs \cite{phillips2004guided} a weight $q_\textbf{v}$ is allocated for each $\textbf{v} \in V_\mathrm{rel} $.
	\begin{equation}
	\label{eq:weightQRel}
	q_\textbf{v} =\lambda_1 p_\textbf{v} + \lambda_2 d_\textbf{v} + \lambda_3 \big(\mathrm{g}_{\mathcal{T}}(\textbf{v})+\mathrm{h}(\textbf{v},\textbf{x}_\mathrm{g})\big)/c_i.
	\end{equation} 
	Here, $p_\textbf{v}$ represents the number of times $\textbf{v}$ has been selected in the past. This penalizes multiple selections and the exploration of the region around a particular vertex.
	The second term, $d_\textbf{v}$ is the number of edges connected to $\textbf{v}$. It promotes sampling in relatively unexplored regions. 
	The last term $0<\big(\mathrm{g}_{\mathcal{T}}(\textbf{v})+\mathrm{h}(\textbf{v},\textbf{x}_\mathrm{g})\big)/c_i <1$ is the estimate of the solution cost through $\textbf{v}$, normalized by the current best cost.
	\setlength{\belowcaptionskip}{-15pt}
	\begin{figure*}[]
		\centering
		\includegraphics[width=.65\columnwidth,height=0.32\columnwidth]{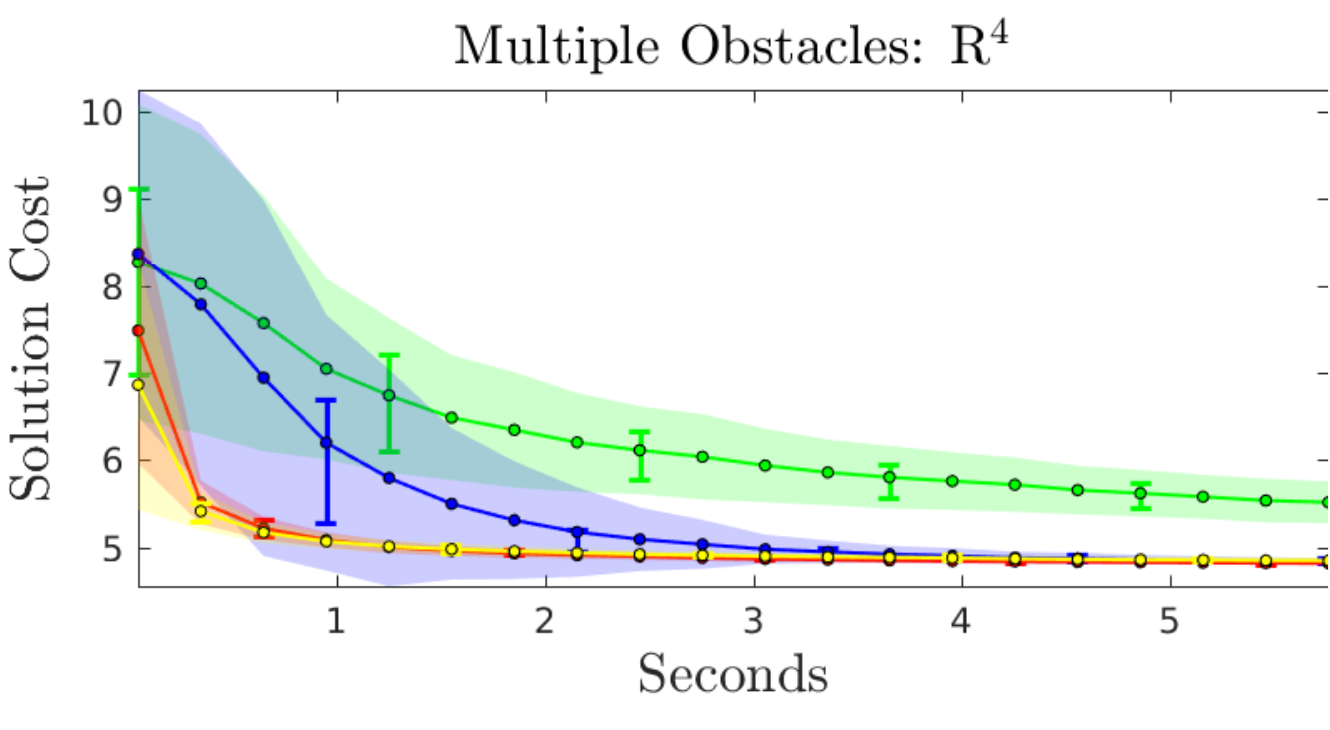}
		\includegraphics[width=.65\columnwidth,height=0.32\columnwidth]{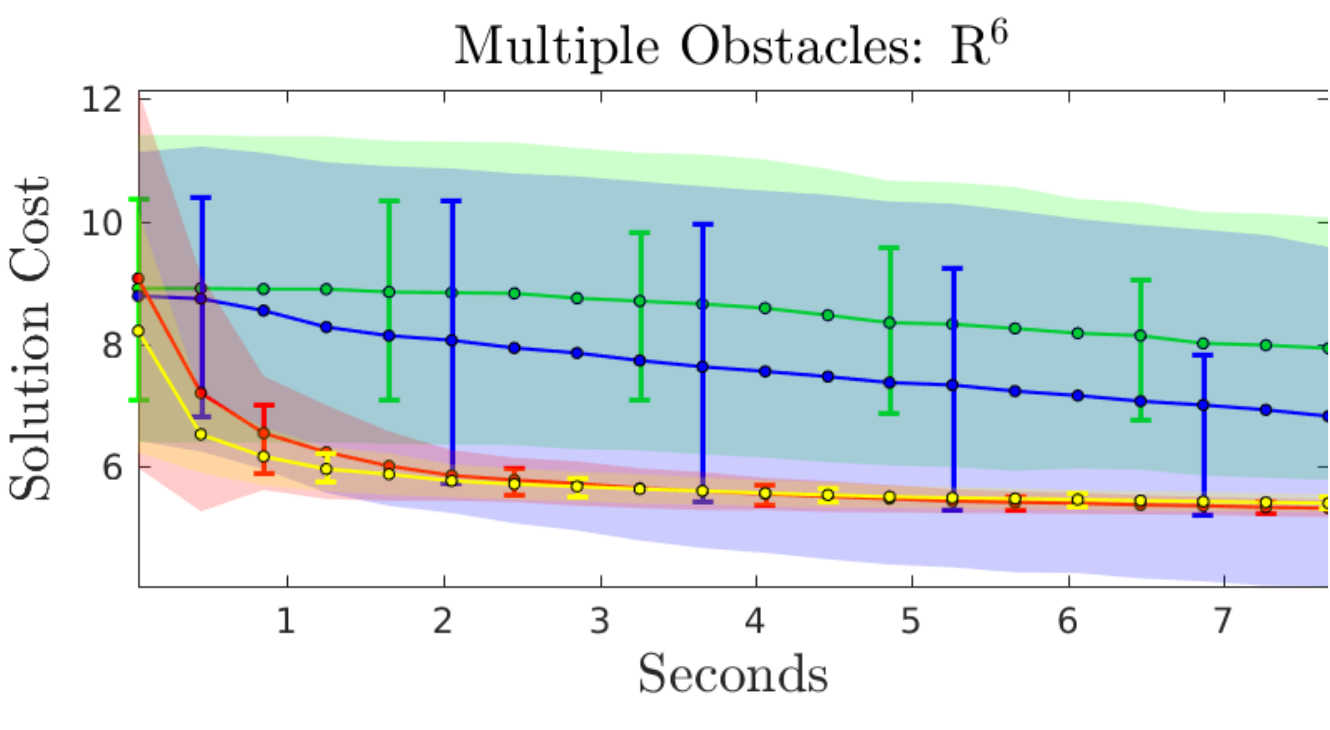}
		\includegraphics[width=.65\columnwidth,height=0.32\columnwidth]{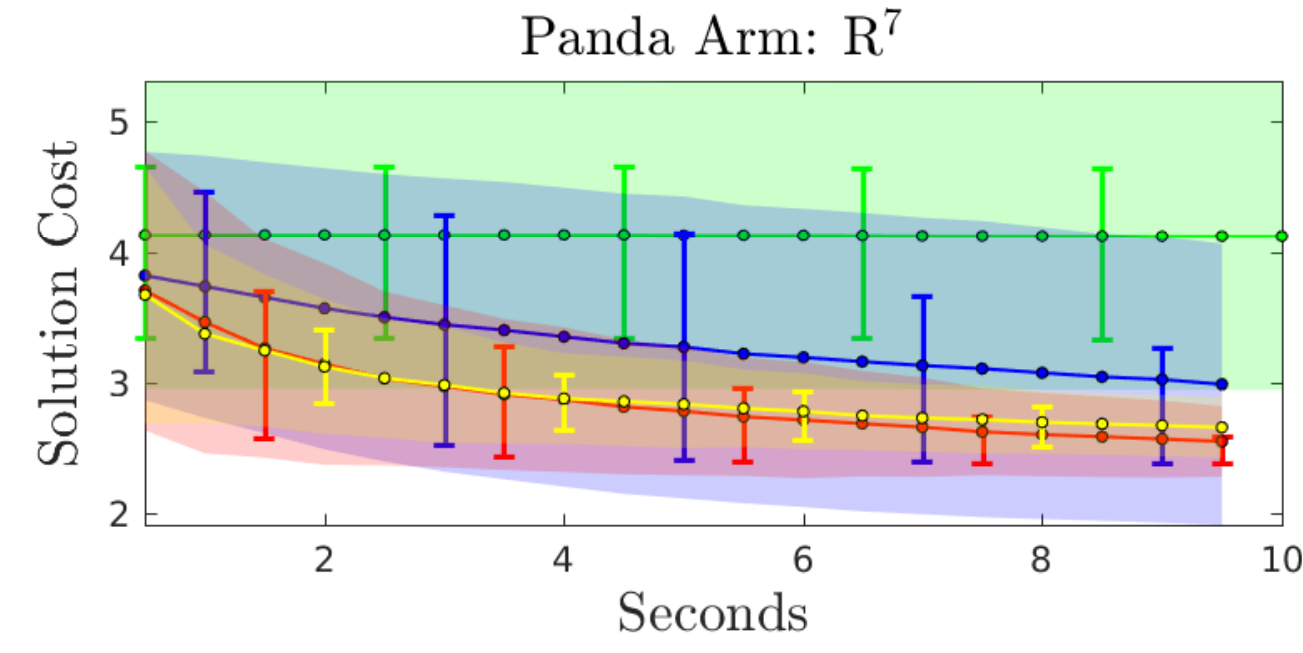}
		
		\includegraphics[width=.65\columnwidth,height=0.32\columnwidth]{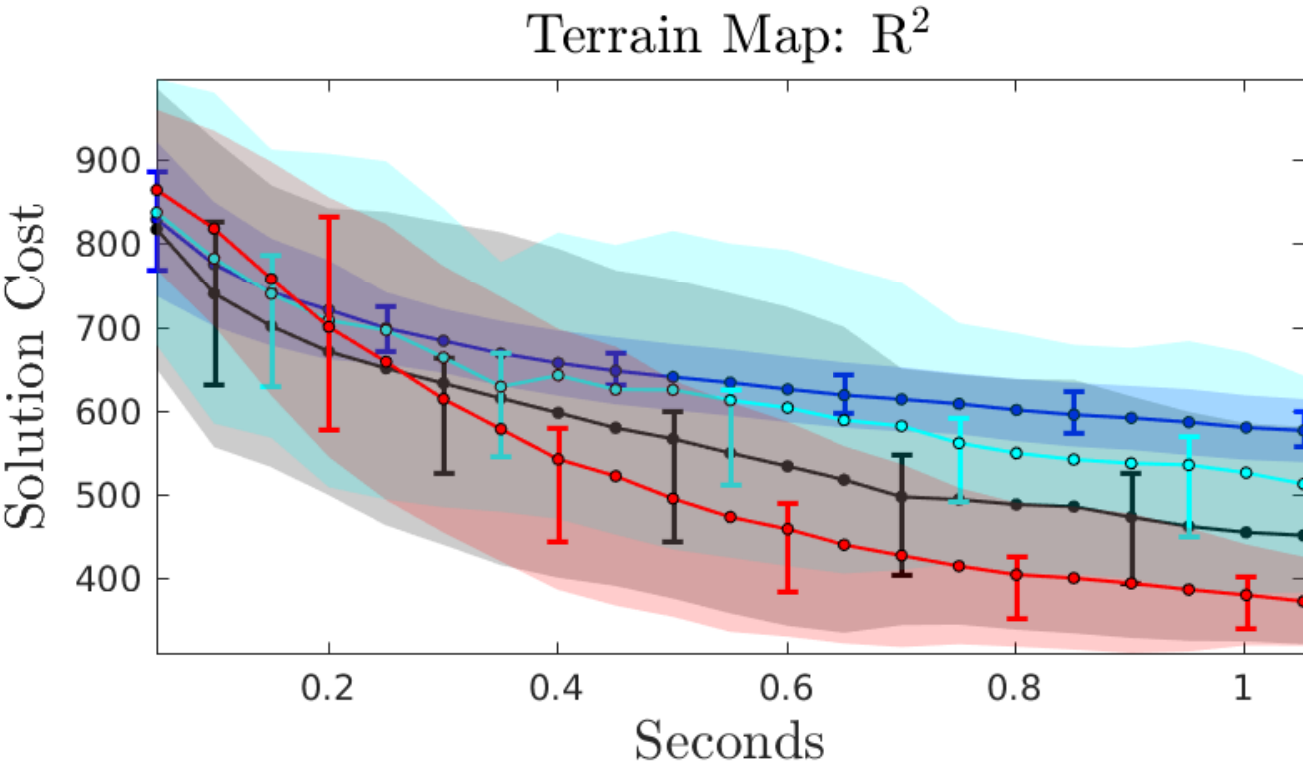}
		\includegraphics[width=.65\columnwidth,height=0.32\columnwidth]{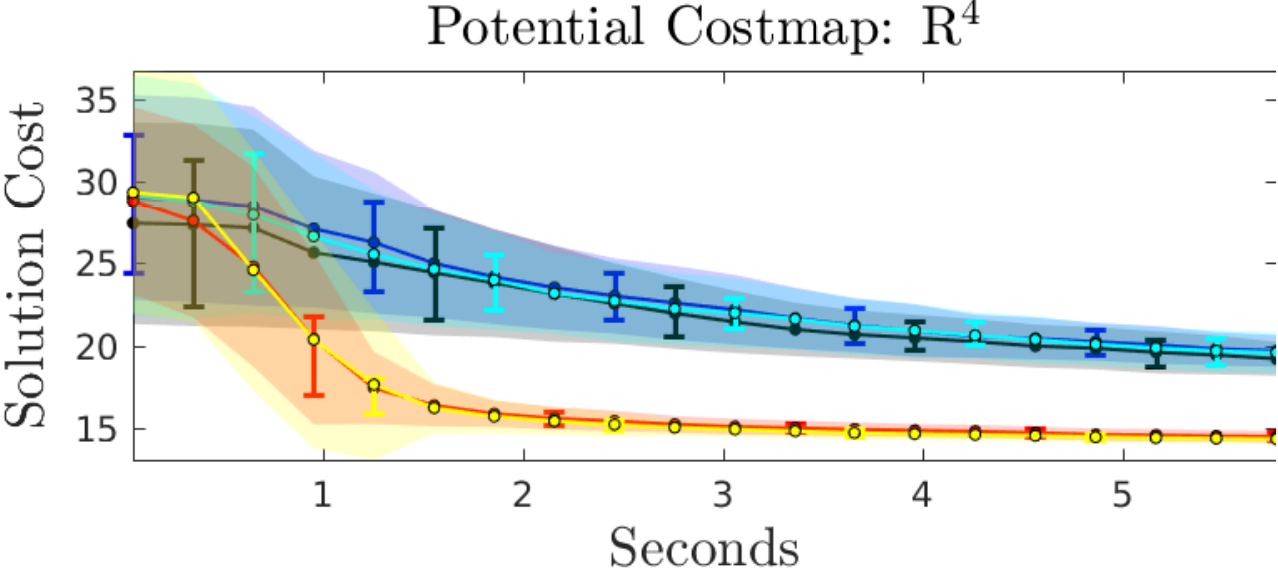}
		\includegraphics[width=.65\columnwidth,height=0.32\columnwidth]{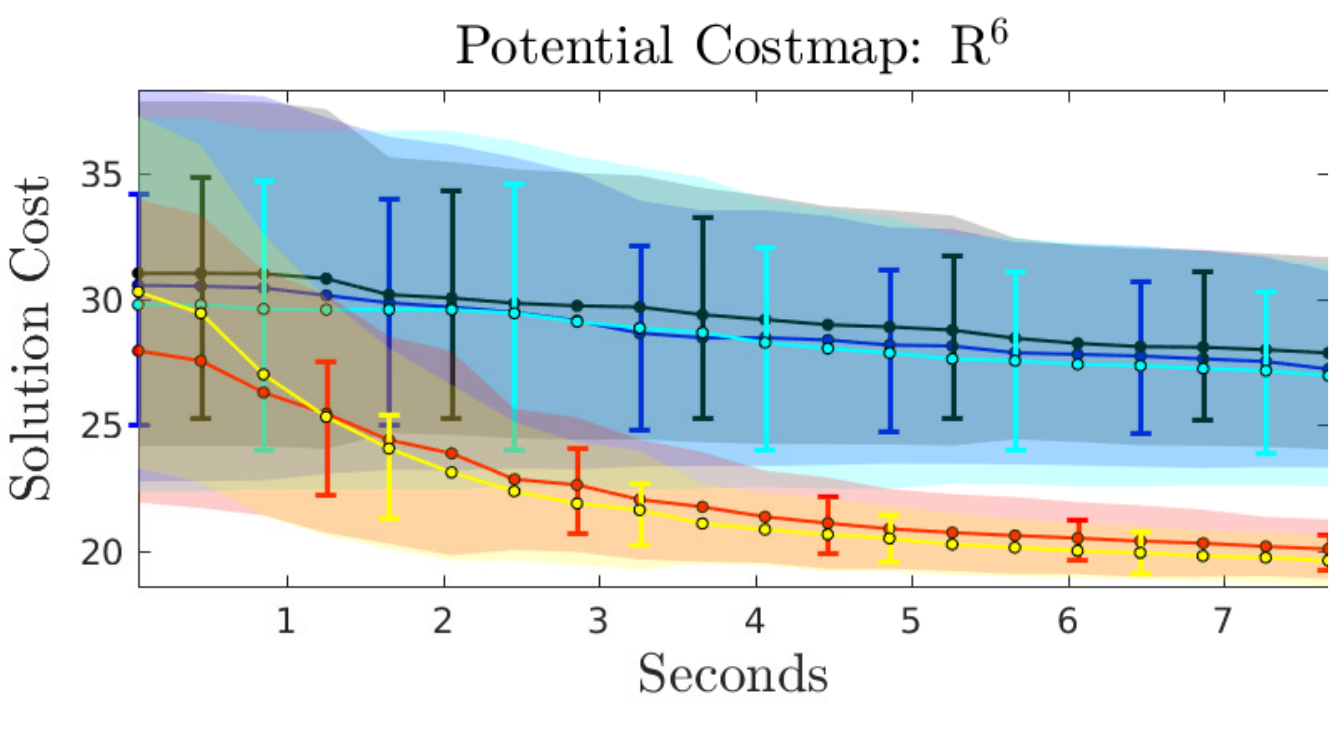}	
		
		\centering
		\includegraphics[width=1.8\columnwidth,height=0.07\columnwidth]{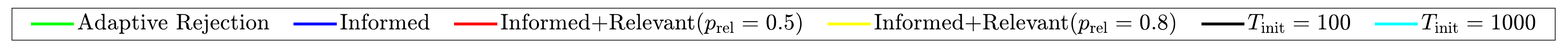}
		\caption{Convergence plots for different sampling methods in various test environments. Solid lines indicate the average value and the standard deviation is shaded. Error bar indicate the upper and lower quartiles.}
		\label{fig:convegencePlot}
	\end{figure*}
	This prioritizes exploration of regions with low solution cost estimates.
	The parameters $(\lambda_1,\lambda_2,\lambda_3) >0$ modulate the behavior of the selection algorithm. 
	A large value of $\lambda_3$ leads to a greedy focus on low solution cost areas, whereas increasing $\lambda_1,\lambda_2$ promotes exploration. 
	A binary heap is used to update and sort $V_\mathrm{rel}$ according to the weight in (\ref{eq:weightQRel}).
	A relevant vertex $\textbf{v}_\mathrm{p}$ is selected by choosing randomly from the top $n_q$ elements in the sorted list. This injects randomness in the selection process and promotes desirable exploration.
	\section{NUMERICAL EXPERIMENTS}
	The performance of the proposed sampling method was benchmarked against direct Informed Sampling~\cite{gammell2018informed} and the third variant of adaptive rejection sampling (described in~\cite{arslan2015dynamic}) in uniform cost-space environments (length-optimal planning).
	For all experiments, the exploration strategies were paired with RRT$^{\#}$'s dynamic programming based global rewiring for exploitation.
	In general cost-map environments, benchmarking was done against Informed Sampling and T-RRT$^{\#}$ (combining conventional RRT$^{\#}$ with the transition-test described in \cite{jaillet2010sampling}) with different initial temperatures $T_\mathrm{init}$.
	All the algorithms were implemented in C++ using the popular OMPL framework~\cite{sucan2012open}, 
	and the tests were run using OMPL's standardized benchmarking tools \cite{moll2015benchmarking}.
	Please see, \url{https://github.gatech.edu/DCSL/relevant_region}.
	A 64-bit desktop PC with 64 GB RAM and an Intel Xeon(R) Processor running Ubuntu 16.04 OS was used.
	The data was recorded over 100 trials for all the cases.
	The proposed algorithm used the following parameter values:$\epsilon=1.5\eta $, $(\lambda_1,\lambda_2,\lambda_3)=(10,5,100),n_q=10$.
	A goal bias of 5\% was used in all sampling methods.
	A description of the different environments is provided below.
	%
	\subsection{Uniform Cost-Map Cases}
	\textit{Multiple Obstacle World:} This environment is illustrated in Fig.~\ref{fig:mulObstacleEnvironment}.
	The 2D environment was extended to $\mathbb{R}^4$ and $\mathbb{R}^6$ by imparting a length of 2 units symmetrically to all of the obstacles.
	A step-size of $\eta=0.6$ and $\eta=1.2$ was used in $\mathbb{R}^4$ and $\mathbb{R}^6$ respectively.
	
	\textit{Panda Arm:} A planning problem for Panda Arm (by Franka Enmika) is illustrated in Fig.~\ref{fig:panda}. The objective was to find a minimum length path in a 7-dimensional configuration (joint) space with joint limits ($\mathbb{R}^7$). These limits and collision checking module were implemented using MoveIt! \cite{chitta2012moveit}. The step-size was set to $\eta=0.7$ for this example.
	\subsection{General Cost-Map Cases}      
	\textit{Terrain Map:} A 2D terrain map shown in Fig.~\ref{fig:grand_canyon} consists of rough, high-cost white areas and the easily navigable black regions.
	The step-size was set to $\eta=0.3$ for this example.
	
	\textit{Potential Cost-Map:} The environment in Fig.~\ref{fig:potential_costmap} emulates the problem of finding the shortest path while staying away from danger areas (white regions). The cost function is defined as
	\begin{equation}
	C(\textbf{x})=1 + 9\big( e^{ -\frac{\|\textbf{x}_1^d -\textbf{x} \|_2^2 }{5} } + e^{ -\frac{\|\textbf{x}_2^d -\textbf{x} \|_2^2 }{5} } \big).
	\end{equation} 
	Here, $\textbf{x}_1^d$, $\textbf{x}_2^d$ are the center points of the danger regions.
	A step-size of $\eta=0.6$ and $\eta=1.5$ was used in $\mathbb{R}^4$ and $\mathbb{R}^6$ version of the environment respectively.
	\vspace{-1.00mm}
	\section{CONCLUSION}
	This work proposes a novel algorithm to sample the Relevant Region set, a subset of the Informed Set, for SBMP.
	The Relevant Region set considers the topology of $\mathcal{X}_\mathrm{free}$, reduces the dependence on heuristics, and effectively focuses the search to accelerate convergence.
	Numerical experiments validate the utility of Relevant Region sampling in conjunction with Informed/Uniform Sampling.
	The proposed method leads to faster convergence in all cases (see Fig.~\ref{fig:convegencePlot}). This is observed especially in higher dimensional problem instances.
	Transition-test based exploration is more effective than purely Uniform/Informed Sampling for planning on general cost-maps. However, the tendency to (probabilistically) reject samples may hinder exploration in some cases.  
	This can be seen in the terrain cost-map (Fig.~\ref{fig:grand_canyon}) which is similar to the cost-space chasms scenario described in \cite{berenson2011addressing}.
	As conveyed in Fig.~\ref{fig:solvedPlotTerrain2D}, the transition-test based exploration fails to find a feasible solution in roughly 40\% of total trials, whereas the proposed method finds a solution in all trials and also accelerates the convergence.
	
	This work presents many avenues for future research.
	The proposed method inherits an additional computational overhead to maintain $V_\mathrm{rel}$. However, this can be alleviated by leveraging ideas from sparse tree planners \cite{dobson2014sparse}, \cite{li2015sparse} to maintain a sparse set of vertices in $V_\mathrm{rel}$.  
	The cost function's gradient information (if available) can be used to bias the search.
	Data from past iterations can also be used to infer the nature of cost-map for intelligent exploration.

	\bibliographystyle{IEEEtran}
	\bibliography{refs}
\end{document}